\documentclass[conference]{IEEEtran}
\IEEEoverridecommandlockouts
\usepackage{cite}
\usepackage{amsmath,amssymb,amsfonts}
\usepackage{algorithmic}
\usepackage{graphicx}
\usepackage{textcomp}
\usepackage{xcolor}
\usepackage{booktabs}
\usepackage{multirow}
\usepackage{amsthm}   
\usepackage{stfloats}
\def\BibTeX{{\rm B\kern-.05em{\sc i\kern-.025em b}\kern-.08em
    T\kern-.1667em\lower.7ex\hbox{E}\kern-.125emX}}

\newtheoremstyle{mythmstyle}
  {3pt}          
  {3pt}          
  {\normalfont}  
  {0pt}          
  {\bfseries}    
  {.}            
  {.5em}         
  {}             

\theoremstyle{mythmstyle}
\newtheorem{theorem}{Theorem}

\theoremstyle{mythmstyle}
\newtheorem{assumption}{Assumption}

\makeatletter
\renewenvironment{proof}[1][\proofname]{\par
  \pushQED{\qed}%
  \normalfont \topsep6\p@\@plus6\p@\relax
  \trivlist
  \item[\hskip\labelsep\bfseries #1\@addpunct{.}]\ignorespaces 
}{\popQED\endtrivlist\@endpefalse}
\makeatother

\begin{document}

\title{Texture-preserving implicit neural representation for Cone beam CT truncated reconstruction\\
\thanks{Identify applicable funding agency here. If none, delete this.}
}

\author{\IEEEauthorblockN{1\textsuperscript{st} Given Name Surname}
\IEEEauthorblockA{\textit{dept. name of organization (of Aff.)} \\
\textit{name of organization (of Aff.)}\\
City, Country \\
email address or ORCID}
\and
\IEEEauthorblockN{2\textsuperscript{nd} Given Name Surname}
\IEEEauthorblockA{\textit{dept. name of organization (of Aff.)} \\
\textit{name of organization (of Aff.)}\\
City, Country \\
email address or ORCID}
\and
\IEEEauthorblockN{3\textsuperscript{rd} Given Name Surname}
\IEEEauthorblockA{\textit{dept. name of organization (of Aff.)} \\
\textit{name of organization (of Aff.)}\\
City, Country \\
email address or ORCID}
\and
\IEEEauthorblockN{4\textsuperscript{th} Given Name Surname}
\IEEEauthorblockA{\textit{dept. name of organization (of Aff.)} \\
\textit{name of organization (of Aff.)}\\
City, Country \\
email address or ORCID}
\and
\IEEEauthorblockN{5\textsuperscript{th} Given Name Surname}
\IEEEauthorblockA{\textit{dept. name of organization (of Aff.)} \\
\textit{name of organization (of Aff.)}\\
City, Country \\
email address or ORCID}
\and
\IEEEauthorblockN{6\textsuperscript{th} Given Name Surname}
\IEEEauthorblockA{\textit{dept. name of organization (of Aff.)} \\
\textit{name of organization (of Aff.)}\\
City, Country \\
email address or ORCID}
}

\title{Texture-preserving implicit neural representation for Cone beam CT truncated reconstruction\\
\thanks{This work was supported by the National Key Research and Development Program of China (No. 2022YFF0706400), the National Natural Science Foundation of China (No. 62171067), and the Fundamental Research Funds for the Central Universities (No. 2024CDJYXTD-009). 

Genyuan Zhang and Junyao Wang contributed equally to this work. 

Corresponding authors: Fenglin Liu}
}

\author{
    \IEEEauthorblockN{
        Genyuan Zhang\textsuperscript{1}, 
        Junyao Wang\textsuperscript{1}, 
        Haoran Lan\textsuperscript{1}, 
        Chuandong Tan\textsuperscript{1}, \\ 
        Songtao Zhu\textsuperscript{1}, 
        Fenglin Liu\textsuperscript{1},  
    }
    \vspace{0.15cm} 
    
    \IEEEauthorblockA{
        \textsuperscript{1}\textit{Key Lab of Optoelectronic Technology and Systems, and Engineering Research Center of} \\
        \textit{Industrial Computed Tomography Nondestructive Testing, Ministry of Education} \\
        \textit{Chongqing University}, Chongqing, China \\
        Genyuan Zhang: zhanggy@stu.cqu.edu.cn; Junyao Wang: wjy19972405100@163.com; \\
        Haoran Lan: 202508021074T@stu.cqu.edu.cn; Chuandong Tan: tancd@stu.cqu.edu.cn; \\
        Songtao Zhu: 202408131056T@stu.cqu.edu.cn; Fenglin Liu: liufl@cqu.edu.cn
    }
    \vspace{0.15cm}
    
}

\maketitle

\begin{abstract}
Cone-beam computed tomography (CBCT) frequently suffers from data truncation, which introduces severe artifacts and limits the effective field of view (FOV). Existing deep learning methods for truncated cone-beam computed tomography (CBCT) reconstruction suffer from serious limitations, including a strict reliance on supervised ground truth and a failure to account for continuous 3D spatial truncation variations. To address these challenges, we introduce a self-supervised 3D reconstruction framework based on neural scene representations. By directly mapping spatial coordinates to radiodensity under projection supervision, our approach inherently bypasses traditional filtering and backprojection operations, thereby fundamentally eliminating truncation-induced ring artifacts while enabling robust continuous 3D data extrapolation. However, coordinate networks are susceptible to an inherent spectral bias, which leads to a severe loss of clinically vital high-frequency textures. To resolve this bottleneck, we further incorporate a physics-based iterative refinement module into the neural scene representation architecture. Leveraging the artifact-free, extrapolated volume from the coordinate network as an optimal initialization, this module progressively re-extracts and injects high-frequency structural information from the original projections back into the volume. Extensive experiments on both simulated and real-world datasets demonstrate that our method successfully unifies the exceptional artifact suppression and extrapolation capabilities of neural networks with the high-fidelity detail preservation of iterative algorithms.
\end{abstract}

\begin{IEEEkeywords}
Self-supervised, 3D reconstruction, iterative refinement, neural scene representations, spectral bias
\end{IEEEkeywords}

\section{Introduction}
Cone-beam Computed Tomography (CBCT) enables three-dimensional anatomical imaging and is one of the key clinical imaging tools. However, its diagnostic efficacy is frequently compromised by data truncation. This phenomenon typically arises in two scenarios: when the imaging field is intentionally restricted to a region of interest (ROI) to minimize radiation dose \cite{yu2006region,sen20123d}, or when a patient's anatomy exceeds the detector's limited field of view (FOV) \cite{ohnesorge2000efficient, modica2011obese}.

Standard analytical algorithms, such as filtered backprojection (FDK), assume that projection data smoothly decays to zero at the detector boundaries. Violating this assumption generates severe, bright truncation ring artifacts \cite{ohnesorge2000efficient}. To address this, numerous data extrapolation methods have been proposed to pad truncated projections \cite{ohnesorge2000efficient,hsieh2004novel,sourbelle2005reconstruction}, alongside techniques that modify the filtering stage to suppress artifacts \cite{xia2013towards,yu2006region,pack2005cone}. Subsequently, compressed sensing approaches, notably total variation (TV) minimization, have been leveraged to improve ROI reconstruction \cite{yu2009compressed,yang2010high,xu2011statistical}. Nevertheless, these traditional methods either rely on hand-crafted priors or only enhance image quality strictly within the FOV, ultimately falling short of a comprehensive solution to the truncation problem.

Deep learning (DL) methods have emerged as a promising alternative, broadly categorized into two paradigms \cite{huang2021data}. The first encompasses image-domain post-processing algorithms, which learn to map artifact-corrupted FBP or DBP reconstructions to truncation-free images in an end-to-end manner \cite{han2019one,fourniect,liman2024diffusion}. However, these approaches often fail to guarantee data consistency with the original measurements. The second paradigm reconstructs images directly from projection data by unrolling analytical or iterative operators into the neural network architecture. While these methods enforce data consistency and achieve impressive results \cite{li2019learning,huang2020field,li2019promising}, they still face critical bottlenecks.

Specifically, existing DL-based methods face the following challenges: 1) These methods rely on untruncation ground truth, which limits their application in scenarios where ground truth is lacking. Furthermore, they present challenges in generalization across different acquisition devices \cite{fawzi2017robustness}. 2) As analyzed in our methodology section regarding the truncation FOV of cone-beam CT, the truncation of a three-dimensional volume changes with its distance from the central plane. Therefore, considering the CBCT truncation problem solely from a 2D perspective has limitations.

Recently, the emergence of neural scene representations—encompassing Neural Radiance Fields (NeRF) or 3D Gaussian Splatting—has provided inspiration for solving the CT truncation problem. We found that these methods, by directly mapping 3D spatial coordinates to their corresponding attenuation coefficients under projection supervision \cite{zang2021intratomo,zha2022naf,cai2024structure}, essentially bypass traditional filtering and backprojection operations, thereby fundamentally eradicating truncation-induced ring artifacts. Furthermore, because the measured X-ray intensity accumulates attenuation from all structures along the ray path (including areas outside the FOV), the network can naturally reconstruct anatomical structures beyond the FOV without additional supervision signals.

\begin{figure}[htbp] 
  \centerline{\includegraphics[width=0.9\linewidth]{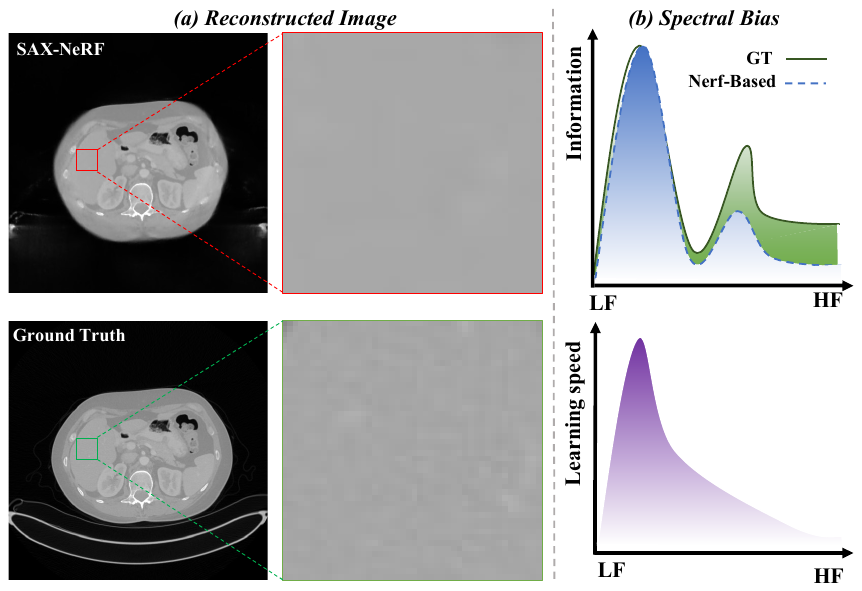}}
  \caption{(a) Representative reconstruction results based on the NeRF method (SAX-NeRF) and ground truth (GT); (b) Schematic diagram of spectral bias: The figure above shows that the difference between the NeRF-based results and the GT within the field of view is mainly concentrated in the high frequency, while the figure below shows that the coordinate network learns high frequencies much slower than it learns low frequencies.}
  \label{bias}
\end{figure}

While neural scene representations excel at artifact suppression and spatial extrapolation, they suffer from a severe loss of clinically vital texture details, as illustrated in Fig. \ref{bias}(a). This degradation stems from the inherent spectral bias of coordinate-based networks \cite{yuce2022structured}: during training, they rapidly converge on low-frequency structural components while struggling to capture high-frequency textural variations (Fig. \ref{bias}(b)). To alleviate this in sparse-view CT reconstruction, various strategies—such as positional encoding, frequency-dependent activation functions, and normalization techniques—have been proposed \cite{sitzmann2020implicit,ramasinghe2022beyond,tancik2020fourier,cai2024batch}. These methods essentially modulate the eigenvalues of the Neural Tangent Kernel (NTK) to accelerate high-frequency learning. However, unlike sparse-view scenarios, truncated projections contain complete and dense high-frequency sampling within the effective FOV. Purely modifying the network's internal representation fails to fully exploit this explicitly measured high-frequency data. On the other hand, since coordinate networks have an inherent spectral bias problem, can we use a method without spectral bias to supplement the missing high-frequency information in the coordinate network reconstruction results?

Based on the above in-depth analysis, we propose a completely different approach. Specifically, we introduce an iteratively refined neural scene representation framework. Specifically, we utilize the artifact-free, extrapolated volume synthesized by the coordinate network as an optimal initialization for a traditional iterative solver (e.g., the Simultaneous Iterative Reconstruction Technique (SIRT) \cite{gilbert1972iterative}). Through subsequent iterative updates driven by the original truncated projections, the high-frequency details are progressively recovered. This seamless coupling effectively restores high-fidelity textures while strictly preserving the robust out-of-FOV extrapolation and artifact suppression achieved by the neural representation.

In summary, we make the following contributions,

\begin{itemize}
    \item We propose a method based on neural scene representations that can fundamentally avoid truncation artifacts while possessing extremely strong and reliable data extrapolation capabilities.
    \item We propose a novel architecture that greatly alleviates the problem of missing high-frequency information in coordinate networks during truncation reconstruction by adding a simple iterative refinement step.
    \item Our simulation experiments, using different phantoms and different FOV sizes, comprehensively demonstrated the feasibility of the method, and further proved its practicality through real-world experiments.
\end{itemize}

\section{Related Work}

\subsection{Neural Scene Representations}
Neural scene representation (NeSR) methods can be categorized into three types. First, implicit neural representation methods utilize MLPs to directly map spatial coordinates into medical signals for 2D medical inverse problems \cite{zang2021intratomo}. Second, neural radiation fields, by introducing physical models and differentiable volume rendering, can directly reconstruct 3D from sparse 2D projections \cite{corona2022mednerf,ruckert2022neat,zha2022naf,cai2024structure}. However, the massive computational cost of ray sampling results in extremely long training and reconstruction times. Third, Gaussian splatting techniques abandon the cumbersome implicit MLP evaluation, employing learnable 3D Gaussian ellipsoids to explicitly represent anatomical structures, combined with fast rasterization techniques, achieving extremely fast reconstruction speeds \cite{kerbl20233d, zha2024r,yuluo2025gr}. However, all of the above methods inevitably lose texture details in medical CT images when applied to truncated reconstruction.
\subsection{Spectral Bias}
Coordinate networks inherently suffer from spectral bias \cite{tancik2020fourier, rahaman2019spectral}, exhibiting a strong tendency to prioritize low-frequency components while failing to capture high-frequency details. From the perspective of Neural Tangent Kernel (NTK) theory, let $K = Q\Lambda Q^T$ denote the eigen-decomposition of the positive semi-definite NTK matrix, where $\Lambda$ contains eigenvalues $\lambda_i \ge 0$. The training error dynamics at iteration $t$ can be approximated as:
\begin{equation}
Q^T(\hat{y}^{(t)}_{train} - y) \approx -e^{-\eta \Lambda t} Q^T y,
\end{equation}
This formulation indicates that the absolute error along the $i$-th eigen-basis decays exponentially at a rate of $\eta \lambda_i$. For conventional MLPs using activations like ReLU, the eigenvalues $\lambda_i$ corresponding to high-frequency components decay precipitously, leading to stalled convergence. To circumvent this limitation, existing literature predominantly explores two paradigms, which fundamentally operate on different terms of this NTK formulation:

\noindent\textbf{Signal Re-organization}: Alternatively, grid-based methods map complex inputs into localized, low-frequency representations via learnable hash tables or multi-scale grids (e.g., InstantNGP \cite{muller2022instant}, NIK-MRI\cite{huang2023neural}, PIXEL \cite{kang2023pixel}, and DINER \cite{zhu2024disorder}). In the NTK context, this mechanism effectively reconstructs the target signal $y$ locally, shifting the signal's energy onto the principal eigen-bases (columns of $Q$) where $\lambda_i$ are inherently large. While this elegantly bypasses the small-$\lambda_i$ penalty to achieve high-fidelity representation, it fundamentally sacrifices interpolation smoothness and often necessitates auxiliary regularization \cite{zhu2026rhino}.

\noindent\textbf{Modifying the Eigenvalue Distribution $\lambda_i$}: This paradigm conceptualizes coordinate networks as a function expansion process using pre-encoded bases, such as Fourier features \cite{tancik2020fourier,landgraf2022pins}, higher-order polynomials \cite{singh2023polynomial}, or periodic/wavelet activations \cite{liu2024finer, ramasinghe2022beyond, sitzmann2020implicit}. Theoretically, these operations explicitly alter the input or activation space to reshape the NTK matrix $K$. By amplifying the eigenvalues $\lambda_i$ associated with high-frequency bases, they directly accelerate the convergence rate $\eta \lambda_i$. However, these methods remain highly sensitive to heuristic hyperparameter tuning and frequency configurations. Another special approach is to use classical normalization techniques to improve the ill-conditioned distribution of NTK feature kernels \cite{cai2024batch,cai2026towards}.

Unlike input mapping and strategies to improve eigenvalue distribution, we propose an iterative post-processing strategy to mitigate the impact of spectral bias on truncated reconstruction.

\section{Methodology}

\subsection{Problem Analysis}
Let $\mathcal{X}$ and $\mathcal{Y}$ denote the 3D image and 2D projection domains. The CT forward model is governed by the X-ray transform $\mathcal{A}: \mathcal{X} \rightarrow \mathcal{Y}$. Under a truncated field of view (FOV), the measurements are masked by a binary diagonal operator $\mathcal{T}$. The truncated inverse problem aims to recover the volume $\mathbf{x} \in \mathcal{X}$ given:
\begin{equation}
    \mathbf{y}_{\text{trunc}} = \mathcal{T}\mathcal{A}\mathbf{x} + \mathbf{\epsilon},
\end{equation}
where $\mathbf{\epsilon}$ is the measurement noise. The non-trivial null space of the composite operator $\mathcal{A}_\mathcal{T} = \mathcal{T}\mathcal{A}$ makes this problem severely ill-posed, inevitably causing artifacts in conventional analytic reconstructions.


A critical, yet often overlooked, characteristic of the CBCT truncation problem is the spatial variability of the FOV along the longitudinal axis.  As shown in Fig \ref{fov} (b), SOD and DOD represent the distances between the X-ray source and detector and the object, respectively; $2d$ represents the detector width; and $\alpha$ represents the X-ray cone angle. For two fields of view (FOV1 and FOV2) located at distances $l_1$ and $l_2$ from the focal plane, their field of view radii are:
\begin{equation}
\begin{array}{l}
R1 = \frac{1}{{\tan \alpha }}(\frac{{{\rm{SOD}}}}{{{\rm{SOD}} + {\rm{DOD}}}} \cdot d - {l_1})\\
R2 = \frac{1}{{\tan \alpha }}(\frac{{SOD}}{{{\rm{SOD}} + {\rm{DOD}}}} \cdot d - {l_2})
\end{array}.
\end{equation}
This geometric relationship rigorously demonstrates that the effective FOV radius is not constant; rather, it decreases linearly as the distance from the central focal plane increases. Consequently, treating CBCT truncation merely as a uniform 2D slice-by-slice problem is inherently limited.

\begin{figure}[htbp] 
  \centerline{\includegraphics[width=\linewidth]{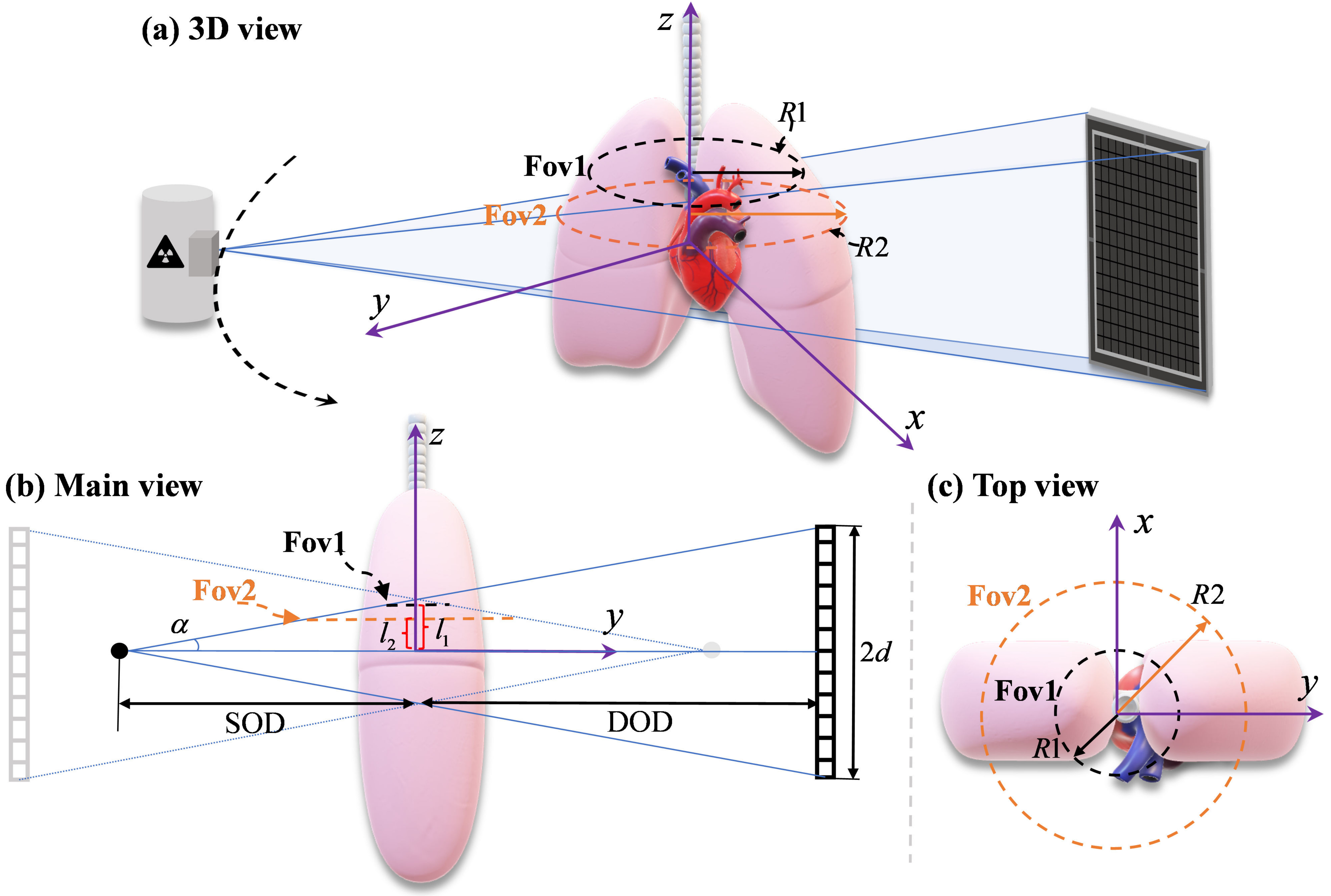}}
  \caption{Field of view analysis: (a) 3D view; (b) Main view; (c) Top view }
  \label{fov}
\end{figure}

\begin{figure*}[htbp] 
  \centerline{\includegraphics[width=\linewidth]{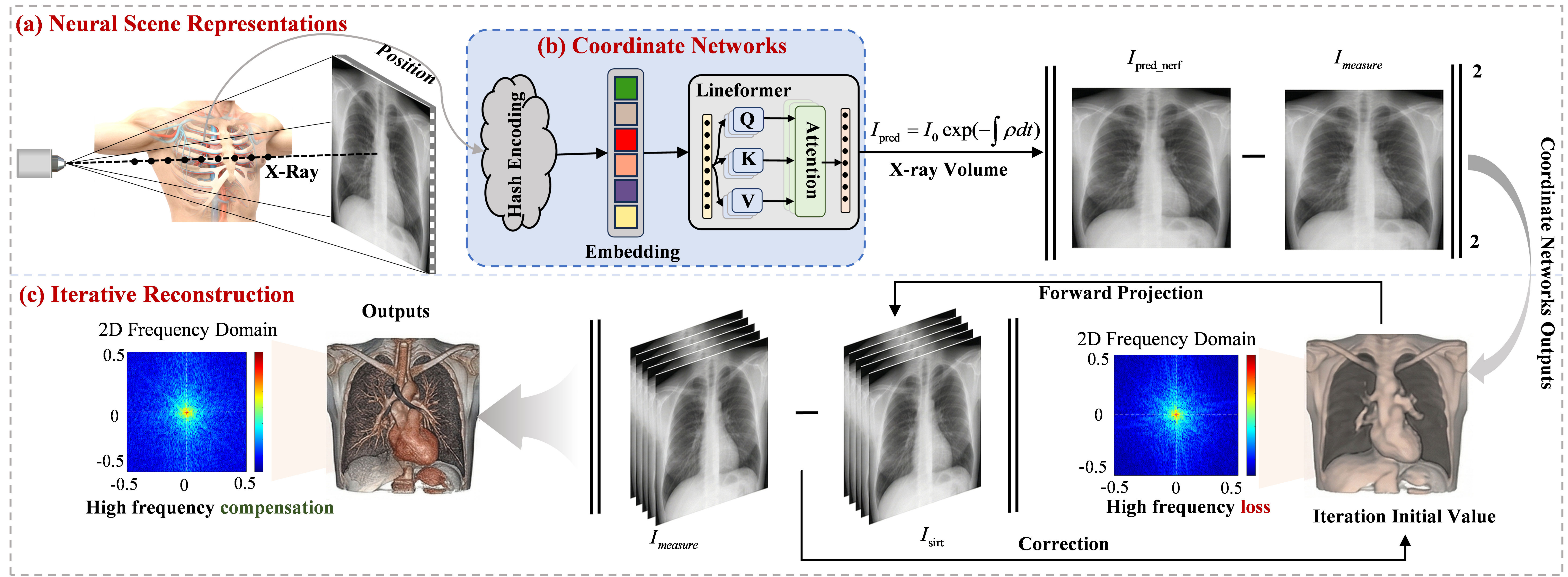}}
  \caption{The overall pipeline of our method. In (a) neural scene representations, a conical beam X-ray scan obtains the measured truncated projection, and then the coordinate information is input into (b) coordinate network. The results of the coordinate network are used as the initial values for (c) iterative reconstruction. The lost high-frequency details are gradually recovered through iteration.}
  \label{overall}
\end{figure*}

\subsection{Overall Framework}
Fig. \ref{overall} illustrates the proposed workflow. As depicted in Fig. \ref{overall}(a), the framework begins with a standard cone-beam X-ray scanning geometry, where the scanner emits cone-shaped beams to capture truncated projections. These projections serve as the supervisory signal for a coordinate-based neural network (Fig. \ref{overall}(b)), which can be instantiated by architectures such as a Multilayer Perceptron (MLP) or Transformer. The network maps 3D spatial coordinates directly to their corresponding attenuation coefficients. We formulate this neural radiodensity field as:
\begin{equation}
    F_\theta : (x, y, z) \rightarrow \rho,
\end{equation}
where $F_\theta$ denotes the continuous mapping function parameterized by trainable weights $\theta$, and $\rho \in \mathbb{R}^+$ represents the predicted radiodensity (attenuation value). 

Governed by the Beer-Lambert law, X-ray intensity attenuates exponentially as it traverses an object. For a given X-ray ray $r(t) = \mathbf{o} + t\mathbf{d} \in \mathbb{R}^3$ bounded by near and far integration limits $t_n$ and $t_f$, the measured projection intensity $\mathbf{I}_{\text{measure}}(r)$ with an initial source intensity $\mathbf{I}_0$ is mathematically modeled as:
\begin{equation}
    \mathbf{I}_{\text{measure}}(r) = \mathbf{I}_0 \cdot \exp\left(-\int_{t_n}^{t_f} \rho(r(t)) dt\right).
    \label{eq1}
\end{equation}
Although the captured projections are spatially truncated due to the limited detector size, the measured intensity values remain physically accurate within the detector bounds, thereby providing valid supervisory signals. By discretizing Eq. \ref{eq1} along the ray path, we derive the predicted projection intensity $\mathbf{I}_{\text{pred}}(r)$:
\begin{equation}
    \mathbf{I}_{\text{pred}}(r) = \mathbf{I}_0 \cdot \exp\left(-\sum_{i=1}^N \rho_i \delta_i\right),
\end{equation}
where $\rho_i$ is the network-predicted radiodensity at the $i$-th sampled location along the ray, and $\delta_i = \|\mathbf{p}_{i+1} - \mathbf{p}_i\|_2$ denotes the Euclidean distance between adjacent sampling points. 

To optimize the network, we define a photometric loss function $\mathcal{L}$ that minimizes the squared error between the predicted and measured intensities across a sampled ray batch $\mathcal{R}$:
\begin{equation}
    \mathcal{L} = \sum_{r\in\mathcal{R}} \|\mathbf{I}_{\text{pred}}(r) - \mathbf{I}_{\text{measure}}(r)\|_2^2.
\end{equation}

Crucially, the sampled 3D coordinates are not strictly bounded by the nominal FOV. Because the measured X-ray intensities accumulate attenuation from all structures along the ray path (including out-of-FOV regions), the network inherently reconstructs the broader anatomical context.  Furthermore, since the neural scene representations method does not involve a back-projection process, it naturally avoids the problem of truncated bright rings.

However, coordinate networks are susceptible to spectral bias, leading to the loss of high-frequency textures. To address this, the synthesized volume from the trained coordinate network is subsequently fed into a physics-based iterative architecture (Fig. \ref{overall}(c)) as a high-quality initialization. Specifically, we employ the SIRT to further refine the volume. By iteratively minimizing the projection residual between the forward-projected initialization and the measured projections, this module progressively re-injects missing high-frequency details into the volume while strictly preserving the artifact-free structural integrity. Detailed mathematical proofs regarding this spectral refinement will be provided in the subsequent sections.

\subsection{Spectral complementarity}

The iterative refinement module effectively recovers the high-frequency information missing from the initial coordinate-based neural network reconstruction. To mathematically formalize this complementary relationship, we analyze the truncated forward operator via its Singular Value Decomposition (SVD).

Let $\mathcal{A}_\mathcal{T} \in \mathbb{R}^{M \times N}$ denote the truncated forward projection matrix. Its compact SVD is given by $\mathcal{A}_\mathcal{T} = \mathbf{U} \mathbf{\Sigma} \mathbf{V}^\top$, where $\mathbf{U}$ and $\mathbf{V}$ possess orthonormal columns, and $\mathbf{\Sigma} = \operatorname{diag}(\sigma_1, \sigma_2, \dots, \sigma_r)$ with strictly positive singular values ordered decreasingly ($\sigma_1 \ge \sigma_2 \ge \dots \ge \sigma_r > 0$). Given a predefined truncation rank $k \in \mathbb{N}^+$, we partition the image and measurement spaces into low-order and high-order subspaces, denoted by subscripts `low' and `high' respectively (e.g., $\mathbf{V} = [\mathbf{V}_{\text{low}}, \mathbf{V}_{\text{high}}]$). The rank $k$ is chosen such that the first $k$ singular values capture the vast majority of the projection energy.

\begin{assumption}[Low-order approximation property of NeSR]
Coordinate-based neural networks exhibit a well-documented spectral bias \cite{yuce2022structured}. In the projection domain, the NeSR reconstruction $\mathbf{x}_{\text{NeSR}}$ is highly consistent with the truncated measurement $\mathbf{y}_{\text{trunc}}$ within the low-order measurement subspace $\mathbf{U}_{\text{low}}$. Formally, there exists a sufficiently small constant $\varepsilon > 0$ such that the initial residual $\mathbf{r}_0 = \mathbf{y}_{\text{trunc}} - \mathcal{A}_\mathcal{T} \mathbf{x}_{\text{NeSR}}$ satisfies:
\begin{equation}
    \|\mathbf{U}_{\text{low}}^\top \mathbf{r}_0\| \le \varepsilon \|\mathbf{r}_0\|.
\end{equation}
\end{assumption}

Based on this structural property, we demonstrate that the initial gradient of the subsequent iterative solver is heavily concentrated in the high-frequency subspace, naturally complementing the NeSR's deficiency.

\begin{theorem}[Spectral Concentration of the Initial Gradient]
\label{thm:spectral_concentration}
Under Assumption 1, if the condition number of the truncated operator $\kappa = \sigma_1 / \sigma_r$ satisfies $\kappa < \sqrt{1-\varepsilon^2}/\varepsilon$, the initial gradient $\mathbf{g}_0$ of the iterative solver is strictly dominated by its high-frequency components (i.e., $\|\mathbf{V}_{\text{high}}^\top \mathbf{g}_0\| > \|\mathbf{V}_{\text{low}}^\top \mathbf{g}_0\|$).
\end{theorem}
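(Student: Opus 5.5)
The plan is to identify the initial gradient of SIRT with the least-squares gradient at the NeSR initialization and then read off its components in the right singular basis. Take the data-fidelity objective $f(\mathbf{x}) = \tfrac12\|\mathbf{y}_{\text{trunc}} - \mathcal{A}_\mathcal{T}\mathbf{x}\|^2$. Its (negative) gradient at $\mathbf{x}_0 = \mathbf{x}_{\text{NeSR}}$ is
\begin{equation*}
\mathbf{g}_0 = \mathcal{A}_\mathcal{T}^\top \mathbf{r}_0 = \mathbf{V}\mathbf{\Sigma}\mathbf{U}^\top \mathbf{r}_0 .
\end{equation*}
SIRT's diagonal row and column normalizations $\mathbf{C},\mathbf{R}$ are treated as absorbed into the operator, i.e.\ we take $\mathcal{A}_\mathcal{T}$ to denote the normalized operator $\mathbf{R}^{1/2}\mathcal{A}_\mathcal{T}\mathbf{C}^{1/2}$. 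The argument is then the same with the SVD of that operator.

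Because $\mathbf{V}$ has orthonormal columns, the components split blockwise:
\begin{equation*}
\mathbf{V}_{\text{low}}^\top\mathbf{g}_0 = \mathbf{\Sigma}_{\text{low}}\mathbf{U}_{\text{low}}^\top\mathbf{r}_0,
\qquad
\mathbf{V}_{\text{high}}^\top\mathbf{g}_0 = \mathbf{\Sigma}_{\text{high}}\mathbf{U}_{\text{high}}^\top\mathbf{r}_0 .
\end{equation*}
From this we get two bounds:
\begin{equation*}
\|\mathbf{V}_{\text{low}}^\top\mathbf{g}_0\|\le\sigma_1\|\mathbf{U}_{\text{low}}^\top\mathbf{r}_0\|\le\sigma_1\varepsilon\|\mathbf{r}_0\|
\quad\text{(by Assumption 1)},
\end{equation*}
\begin{equation*}
\|\mathbf{V}_{\text{high}}^\top\mathbf{g}_0\|\ge\sigma_r\|\mathbf{U}_{\text{high}}^\top\mathbf{r}_0\| .
\end{equation*}

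To lower-bound $\|\mathbf{U}_{\text{high}}^\top\mathbf{r}_0\|$, use Pythagoras in the column space of $\mathbf{U}$:
\begin{equation*}
\|\mathbf{U}_{\text{high}}^\top\mathbf{r}_0\|^2=\|\mathbf{U}^\top\mathbf{r}_0\|^2-\|\mathbf{U}_{\text{low}}^\top\mathbf{r}_0\|^2 .
\end{equation*}
This is the main obstacle. Without further structure, $\|\mathbf{U}^\top\mathbf{r}_0\|$ could be much smaller than $\|\mathbf{r}_0\|$, since part of the residual may lie in $\operatorname{range}(\mathcal{A}_\mathcal{T})^\perp$, for example pure noise.

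The cleanest fix is to work with the range-projected residual. The component of $\mathbf{r}_0$ orthogonal to the range of $\mathcal{A}_\mathcal{T}$ lies in $\ker\mathcal{A}_\mathcal{T}^\top$ and contributes nothing to $\mathbf{g}_0$. Both the theorem's conclusion and Assumption 1 can therefore be read with $\mathbf{r}_0$ replaced by $P_{\mathbf{U}}\mathbf{r}_0=\mathbf{U}\mathbf{U}^\top\mathbf{r}_0$. Equivalently, I will state that in the consistent or noiseless-range setting $\|\mathbf{U}^\top\mathbf{r}_0\|=\|\mathbf{r}_0\|$, and otherwise interpret the assumption relative to $\|\mathbf{U}^\top\mathbf{r}_0\|$. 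Under that reading,
\begin{equation*}
\|\mathbf{U}_{\text{high}}^\top\mathbf{r}_0\|\ge\sqrt{1-\varepsilon^2}\,\|\mathbf{r}_0\| .
\end{equation*}

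Combining the bounds gives
\begin{equation*}
\|\mathbf{V}_{\text{high}}^\top\mathbf{g}_0\|\ge\sigma_r\sqrt{1-\varepsilon^2}\,\|\mathbf{r}_0\|,
\qquad
\|\mathbf{V}_{\text{low}}^\top\mathbf{g}_0\|\le\sigma_1\varepsilon\|\mathbf{r}_0\| .
\end{equation*}
Hence the strict inequality $\|\mathbf{V}_{\text{high}}^\top\mathbf{g}_0\|>\|\mathbf{V}_{\text{low}}^\top\mathbf{g}_0\|$ holds whenever $\sigma_r\sqrt{1-\varepsilon^2}>\sigma_1\varepsilon$, which is exactly $\kappa<\sqrt{1-\varepsilon^2}/\varepsilon$.

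We also need $\mathbf{r}_0$ to have a nonzero range component, since otherwise both sides vanish and the inequality is not strict. I will assume this explicitly, as the case where the NeSR already fits the data exactly is the trivial one.

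A sharper version uses only the relevant singular values, $\sigma_1$ on the low block and $\sigma_r$ on the high block, so the condition could even be weakened to $\sigma_1/\sigma_r$ restricted to these blocks. I will remark on this but state the proof with $\kappa$ as in the theorem.
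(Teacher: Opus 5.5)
Your proposal matches the paper's proof step for step: the blockwise identities $\mathbf{V}_{\text{low}}^\top\mathbf{g}_0=\mathbf{\Sigma}_{\text{low}}\mathbf{U}_{\text{low}}^\top\mathbf{r}_0$ and $\mathbf{V}_{\text{high}}^\top\mathbf{g}_0=\mathbf{\Sigma}_{\text{high}}\mathbf{U}_{\text{high}}^\top\mathbf{r}_0$, the upper bound $\sigma_1\varepsilon\|\mathbf{r}_0\|$ from Assumption 1, and the Pythagorean lower bound $\sigma_r\sqrt{1-\varepsilon^2}\|\mathbf{r}_0\|$. Your range caveat is necessary, not cosmetic: the paper asserts $\mathbf{r}_0=\mathbf{U}_{\text{low}}\mathbf{U}_{\text{low}}^\top\mathbf{r}_0+\mathbf{U}_{\text{high}}\mathbf{U}_{\text{high}}^\top\mathbf{r}_0$, which for a compact SVD holds only if $\mathbf{r}_0\in\operatorname{range}(\mathcal{A}_\mathcal{T})$, and without your extra hypothesis (plus $\mathbf{U}^\top\mathbf{r}_0\neq\mathbf{0}$ for strictness) the statement can fail, e.g.\ $\mathbf{r}_0=\delta\mathbf{u}_1+\mathbf{w}$ with $\mathbf{u}_1$ the leading left singular vector, $\mathbf{w}\perp\operatorname{range}(\mathcal{A}_\mathcal{T})$, $\|\mathbf{w}\|=1$, $0<\delta<1/\kappa$ satisfies Assumption 1 with $\varepsilon=\delta/\sqrt{1+\delta^2}$ and the condition on $\kappa$, yet gives $\mathbf{V}_{\text{high}}^\top\mathbf{g}_0=\mathbf{0}$.
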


The detailed mathematical proof of Theorem \ref{thm:spectral_concentration} is provided in Appendix A. This theorem guarantees that the magnitude of the initial gradient is decisively driven by high-frequency errors, allowing the iterative solver to specifically target and recover the missing localized details without severely altering the well-fitted global structures.

\begin{figure*}[!b] 
  \centerline{\includegraphics[width=\linewidth]{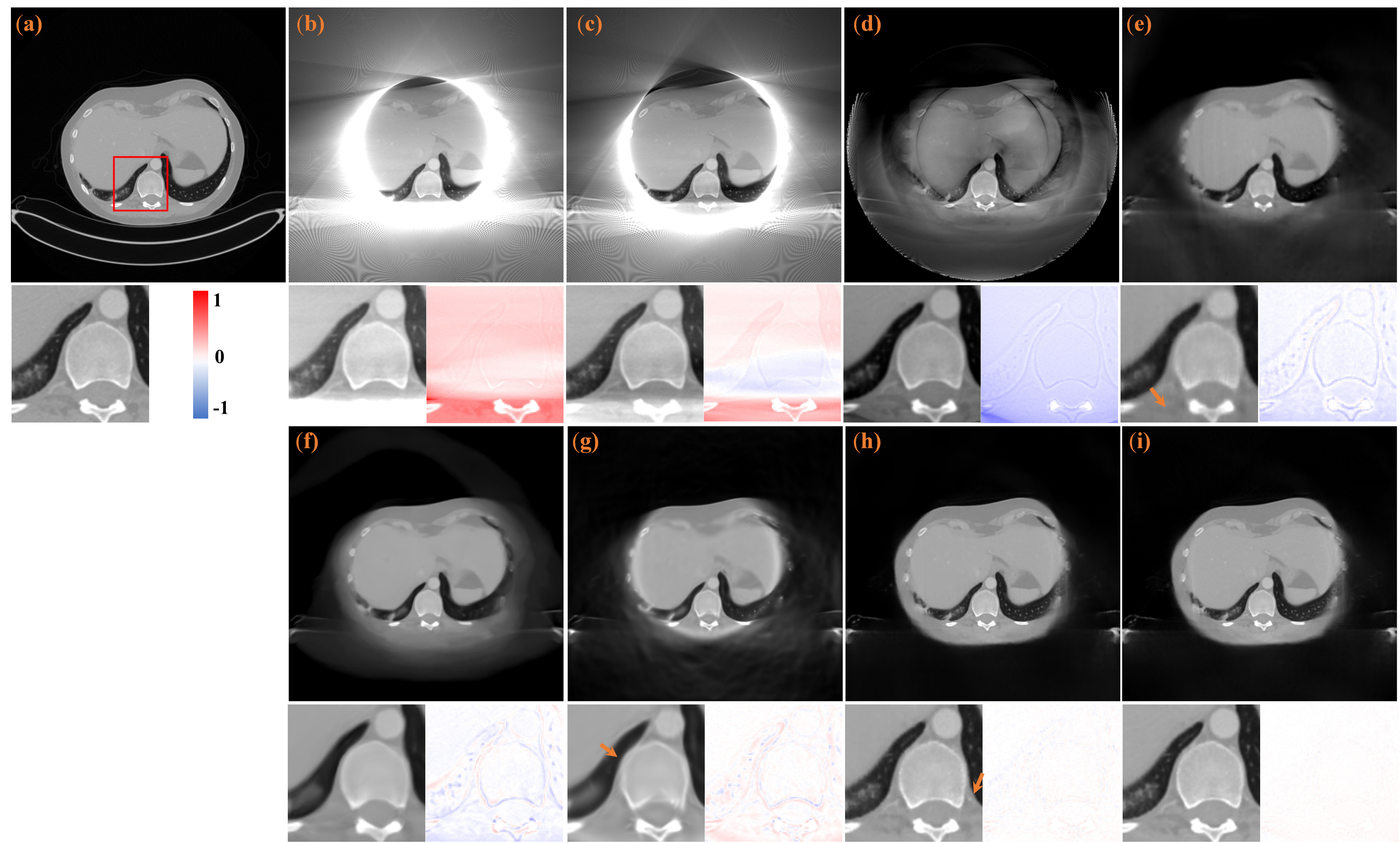}}
  \caption{Visual evaluation results for the pancreatic dataset (FOV: approx. 125 pixels, 56th axial slice). The red box delineates the ROI; the bottom-left corner displays a local magnification, while the bottom-right corner shows the difference map between the algorithm's local magnification and the ground truth. (a) Ground Truth, (b) FDK, (c) Projection Mirror Extension Reconstruction, (d) SIRT, (e) NAF, (f) NeRF, (g) ${R^2}$-Gaussian, (h) SAX-NeRF, (i) Ours.}
  \label{s_fov}
\end{figure*}

\begin{figure*}[!b] 
  \centerline{\includegraphics[width=\linewidth]{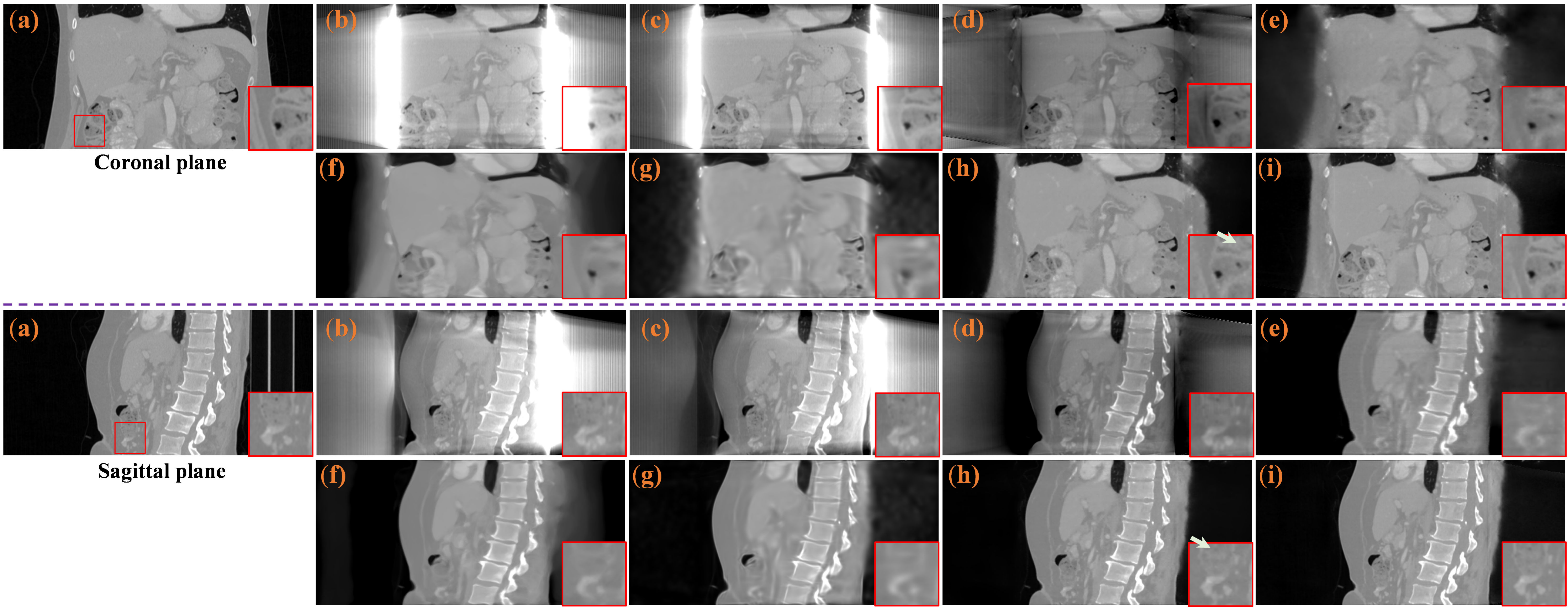}}
  \caption{Reconstruction results in the coronal and sagittal planes using different methods. The bottom right corner displays a magnified view of the red ROI region. (a) Ground Truth, (b) FDK, (c) Projection Mirror Extension Reconstruction, (d) SIRT, (e) NAF, (f) NeRF, (g) ${R^2}$-Gaussian, (h) SAX-NeRF, (i) Ours.}
  \label{3d}
\end{figure*}

\begin{figure*}[htbp] 
  \centerline{\includegraphics[width=\linewidth]{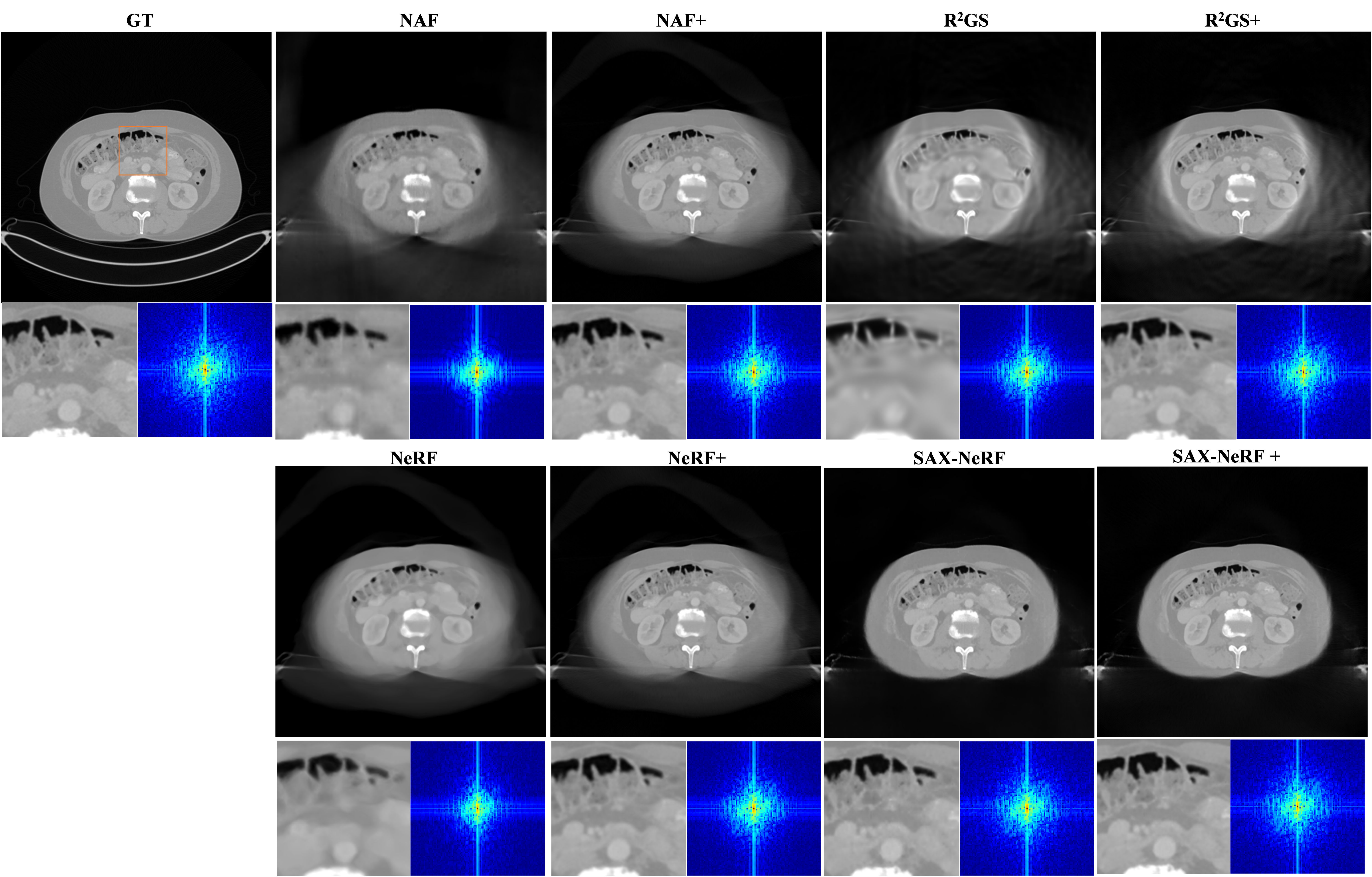}}
  \caption{Compare the improvement effects of different algorithms under the proposed iterative architecture (170th axial slice). The second line shows a magnified view of the area and its corresponding spectrum. "+" indicates that the proposed architecture has been deployed on the baseline approach.}
  \label{fre+}
\end{figure*}

\section{EXPERIMENTS AND RESULTS}
\subsection{Datasets}
We evaluated our method on three publicly available medical datasets (Pancreas, Pelvis, and Abdomen) and one acquired physical dataset. Because the public datasets lack raw projection data, we employed the TIGRE toolbox \cite{biguri2016tigre} to simulate the cone-beam forward projection process. To assess performance under varying degrees of truncation, we configured distinct effective Fields of View (FOVs) for each dataset. Specifically, the FOV radii for the central slices were set to approximately 125, 150, and 203 pixels for the Pancreas, Pelvis, and Abdomen datasets, respectively. For the real-world physical experiment, we acquired truncated projections of an ex vivo ovine bone specimen using a CD-130BX micro-CT scanner (Chongqing Zhence) operating at 60 kV and 40 mA. Comprehensive scanning geometries and parameters for all datasets are detailed in Appendix B.

\subsection{Implementation Details}
The proposed framework was implemented in PyTorch. Our coordinate network architecture utilizes a 4-layer Lineformer, initialized following SAX-NeRF. The network was optimized using the Adam optimizer for 1,500 iterations with a learning rate of $1 \times 10^{-4}$. During volume rendering, we sampled a batch of 1,024 rays per iteration, evaluating 576 points along each ray. The subsequent physics-based iterative refinement module was executed for 500 iterations. All experiments were conducted on a single NVIDIA RTX 4090D GPU.

To comprehensively validate the proposed method, we benchmarked it against four categories of reconstruction techniques: 1) analytical methods, including FDK \cite{feldkamp1984practical} and FDK with projection mirroring extrapolation (EX-FDK) \cite{ohnesorge2000efficient}; 2) iterative methods, specifically SIRT \cite{gilbert1972iterative}; The remaining two categories are unsupervised methods originally designed for sparse-view reconstruction. Although these methods have never been applied to truncated reconstruction before, we observed that they yield impressive results without any structural modification. Therefore, their inclusion serves as a reference for generalizing such algorithms to truncation tasks. These two categories are: 3) coordinate-based implicit neural representations, comprising NAF \cite{zha2022naf}, NeRF \cite{mildenhall2021nerf}, and SAX-NeRF \cite{cai2024structure}; and 4) 3D Gaussian splatting, represented by R$^2$GS \cite{zha2024r}.

To ensure fair comparisons, all baseline methods were implemented strictly according to their original publications and official codebases. For traditional baselines, the mirroring extrapolation utilized a truncation detection threshold of 0.1, while SIRT was executed for 500 iterations using the ASTRA toolbox \cite{van2016fast}. For the neural rendering baselines (NAF, NeRF, and SAX-NeRF), the training length (1,500 iterations) and ray batch size (1,024) were standardized. NAF and SAX-NeRF sampled 576 points per ray using a 4-layer MLP and a 4-layer Lineformer, respectively. Standard NeRF employed an 8-layer MLP with 192 coarse and 192 fine samples per ray; however, due to GPU memory constraints during the high-resolution real-world experiment, the fine samples were reduced to 96. Finally, R$^2$GS was similarly optimized for 1,500 iterations. These configurations were kept consistent across both simulated and physical scenarios.

Reconstruction quality was assessed both qualitatively and quantitatively. Qualitative evaluations highlight structural fidelity and artifact suppression through magnified ROIs and error difference maps. Quantitative evaluations utilize the Peak Signal-to-Noise Ratio (PSNR) and Structural Similarity Index Measure (SSIM) \cite{wang2004image}. To thoroughly analyze the extrapolation capabilities, these quantitative metrics were computed separately for the effective region strictly within the FOV and the globally extrapolated volume.

\begin{table}[t]
\centering
\caption{Quantitative results on baseline improvements from the proposed iterative refinement architecture. The `+' suffix denotes the baseline equipped with our proposed architecture.}
\label{tab:method_improvements}
\begin{tabular}{lcc}
\toprule
Method & PSNR (dB) $\uparrow$ & SSIM $\uparrow$ \\
\midrule
NAF            & 24.27 & 0.54 \\
NAF+           & 24.52 & 0.64 \\
\midrule
NeRF           & 26.10 & 0.68 \\
NeRF+          & 26.28 & 0.72 \\
\midrule
R$^2$GS        & 25.34 & 0.71 \\
R$^2$GS+       & 25.50 & 0.72 \\
\midrule
SAX-NeRF       & 27.37 & 0.78 \\
SAX-NeRF+      & 27.58 & 0.81 \\
\bottomrule
\end{tabular}
\end{table}

\begin{table*}[t]
\centering
\caption{Quantitative evaluation results on Pancreas, Pelvis, and Abdomen datasets. Metrics are reported as mean $\pm$ standard deviation for both the in FOV and the Overall volume.}
\label{tab:quantitative_metrics}
\resizebox{\textwidth}{!}{
\begin{tabular}{llcccccc}
\toprule
\multirow{2}{*}{Method} & \multirow{2}{*}{Metric} & \multicolumn{2}{c}{Pancreas} & \multicolumn{2}{c}{Pelvis} & \multicolumn{2}{c}{Abdomen} \\
\cmidrule(lr){3-4} \cmidrule(lr){5-6} \cmidrule(lr){7-8}
 & & In FOV (125) & Overall & In FOV (150) & Overall & In FOV (203) & Overall \\
\midrule
\multirow{2}{*}{FDK}      & PSNR $\uparrow$ & $16.37 \pm 0.31$ & $9.24 \pm 0.80$ & $20.73 \pm 0.60$ & $13.42 \pm 0.61$ & $16.06 \pm 0.86$ & $11.75 \pm 1.27$ \\
                          & SSIM $\uparrow$ & $0.8275 \pm 0.0003$ & $0.2209 \pm 0.0005$ & $0.8777 \pm 0.0003$ & $0.3154 \pm 0.0011$ & $0.8270 \pm 0.0026$ & $0.5089 \pm 0.0003$ \\
\midrule
\multirow{2}{*}{EX-FDK}   & PSNR $\uparrow$ & $25.32 \pm 0.90$ & $11.16 \pm 0.37$ & $28.61 \pm 0.11$ & $15.57 \pm 0.15$ & $31.85 \pm 0.70$ & $14.83 \pm 0.29$ \\
                          & SSIM $\uparrow$ & $0.9474 \pm 0.0003$ & $0.2686 \pm 0.0004$ & $0.9240 \pm 0.0004$ & $0.3444 \pm 0.0000$ & $0.9365 \pm 0.0001$ & $0.6052 \pm 0.0000$ \\
\midrule
\multirow{2}{*}{SIRT}     & PSNR $\uparrow$ & $21.50 \pm 1.65$ & $18.76 \pm 0.43$ & $26.42 \pm 0.36$ & $22.35 \pm 0.31$ & $31.07 \pm 4.89$ & $22.85 \pm 0.23$ \\
                          & SSIM $\uparrow$ & $0.9209 \pm 0.0007$ & $0.3663 \pm 0.0009$ & $0.9183 \pm 0.0000$ & $0.4404 \pm 0.0002$ & $0.9165 \pm 0.0000$ & $0.6746 \pm 0.0003$ \\
\midrule
\multirow{2}{*}{NAF}      & PSNR $\uparrow$ & $32.33 \pm 1.23$ & $24.43 \pm 0.15$ & $38.75 \pm 0.31$ & $30.45 \pm 0.22$ & $33.97 \pm 2.52$ & $26.29 \pm 1.65$ \\
                          & SSIM $\uparrow$ & $0.9305 \pm 0.0002$ & $0.5461 \pm 0.0002$ & $0.9503 \pm 0.0000$ & $0.8302 \pm 0.0008$ & $0.8688 \pm 0.0004$ & $0.6893 \pm 0.0010$ \\
\midrule
\multirow{2}{*}{NeRF}     & PSNR $\uparrow$ & $36.05 \pm 0.53$ & $26.17 \pm 0.47$ & $36.51 \pm 0.60$ & $30.93 \pm 0.12$ & $32.29 \pm 0.89$ & $26.53 \pm 0.39$ \\
                          & SSIM $\uparrow$ & $0.9510 \pm 0.0002$ & $0.6793 \pm 0.0003$ & $0.9317 \pm 0.0000$ & $0.8725 \pm 0.0001$ & $0.8386 \pm 0.0002$ & $0.7321 \pm 0.0003$ \\
\midrule
\multirow{2}{*}{R2GS}     & PSNR $\uparrow$ & $36.68 \pm 1.84$ & $25.37 \pm 0.10$ & $37.66 \pm 1.04$ & $30.20 \pm 1.69$ & $34.88 \pm 2.70$ & $27.46 \pm 2.28$ \\
                          & SSIM $\uparrow$ & $0.9347 \pm 0.0002$ & $0.7039 \pm 0.0005$ & $0.9421 \pm 0.0000$ & $0.8156 \pm 0.0008$ & $0.8832 \pm 0.0003$ & $0.7529 \pm 0.0006$ \\
\midrule
\multirow{2}{*}{SAX-NeRF} & PSNR $\uparrow$ & $43.41 \pm 1.01$ & $27.59 \pm 0.22$ & $41.87 \pm 0.75$ & $30.83 \pm 0.10$ & $37.95 \pm 2.11$ & $30.27 \pm 1.96$ \\
                          & SSIM $\uparrow$ & $0.9762 \pm 0.0000$ & $0.7836 \pm 0.0001$ & $0.9700 \pm 0.0000$ & $0.8221 \pm 0.0001$ & $0.9135 \pm 0.0002$ & $0.8406 \pm 0.0006$ \\
\midrule
\multirow{2}{*}{Ours}     & PSNR $\uparrow$ & $\mathbf{45.41 \pm 4.03}$ & $\mathbf{27.75 \pm 0.23}$ & $\mathbf{45.67 \pm 3.12}$ & $\mathbf{31.26 \pm 0.22}$ & $\mathbf{41.59 \pm 1.33}$ & $\mathbf{31.01 \pm 2.43}$ \\
                          & SSIM $\uparrow$ & $\mathbf{0.9914 \pm 0.0000}$ & $\mathbf{0.8154 \pm 0.0002}$ & $\mathbf{0.9893 \pm 0.0000}$ & $\mathbf{0.8353 \pm 0.0003}$ & $\mathbf{0.9635 \pm 0.0000}$ & $\mathbf{0.8695 \pm 0.0004}$ \\
\bottomrule
\end{tabular}
}
\end{table*}

\begin{figure*}[htbp] 
  \centerline{\includegraphics[width=0.8\linewidth]{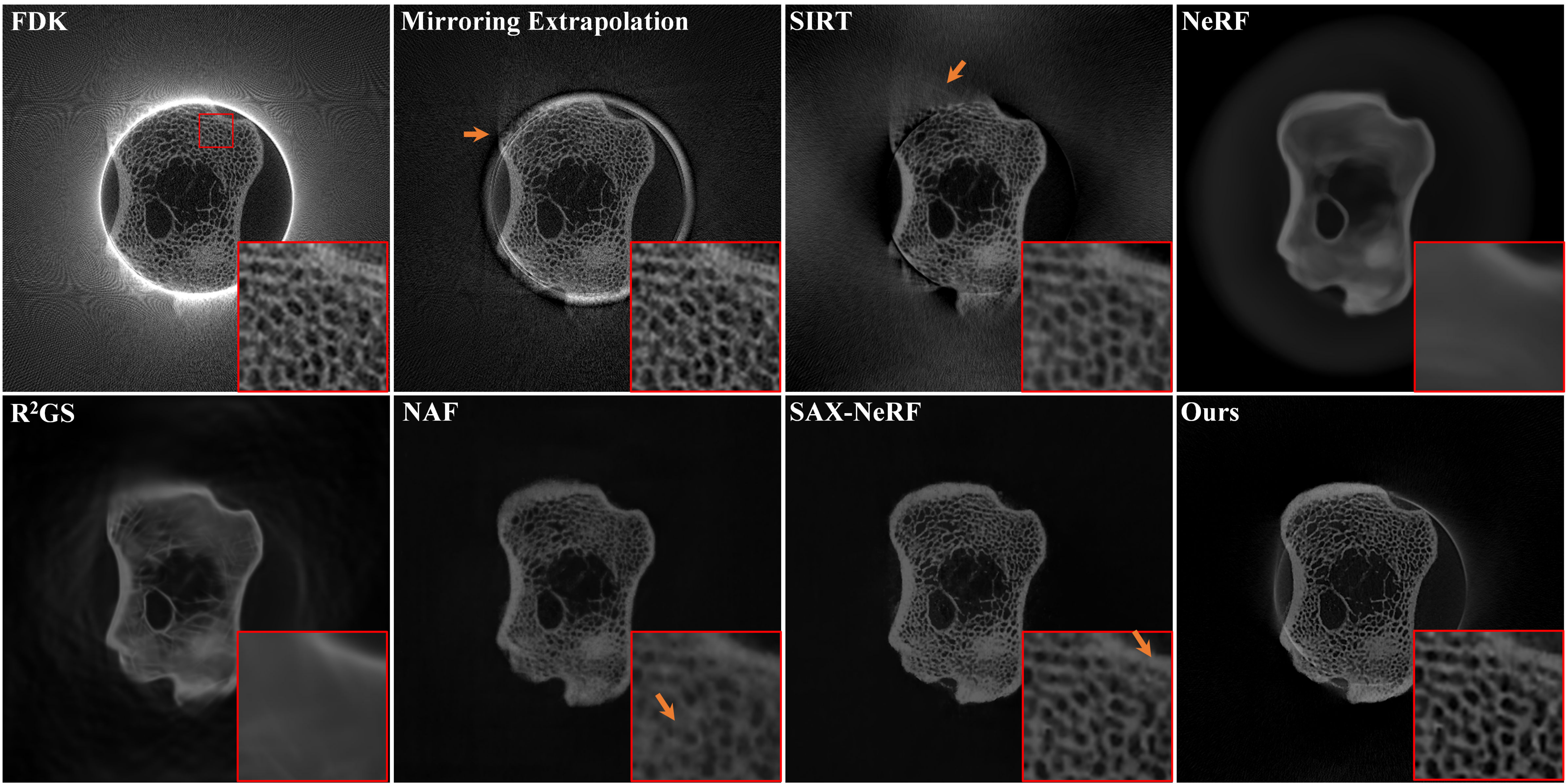}}
  \caption{Results of the visual assessment of the horizontal plane in the real-world experiment (199th axial slice). The magnified view of the ROI is located in the bottom right corner.}
  \label{realxy}
\end{figure*}

\begin{figure*}[htbp] 
  \centerline{\includegraphics[width=0.8\linewidth]{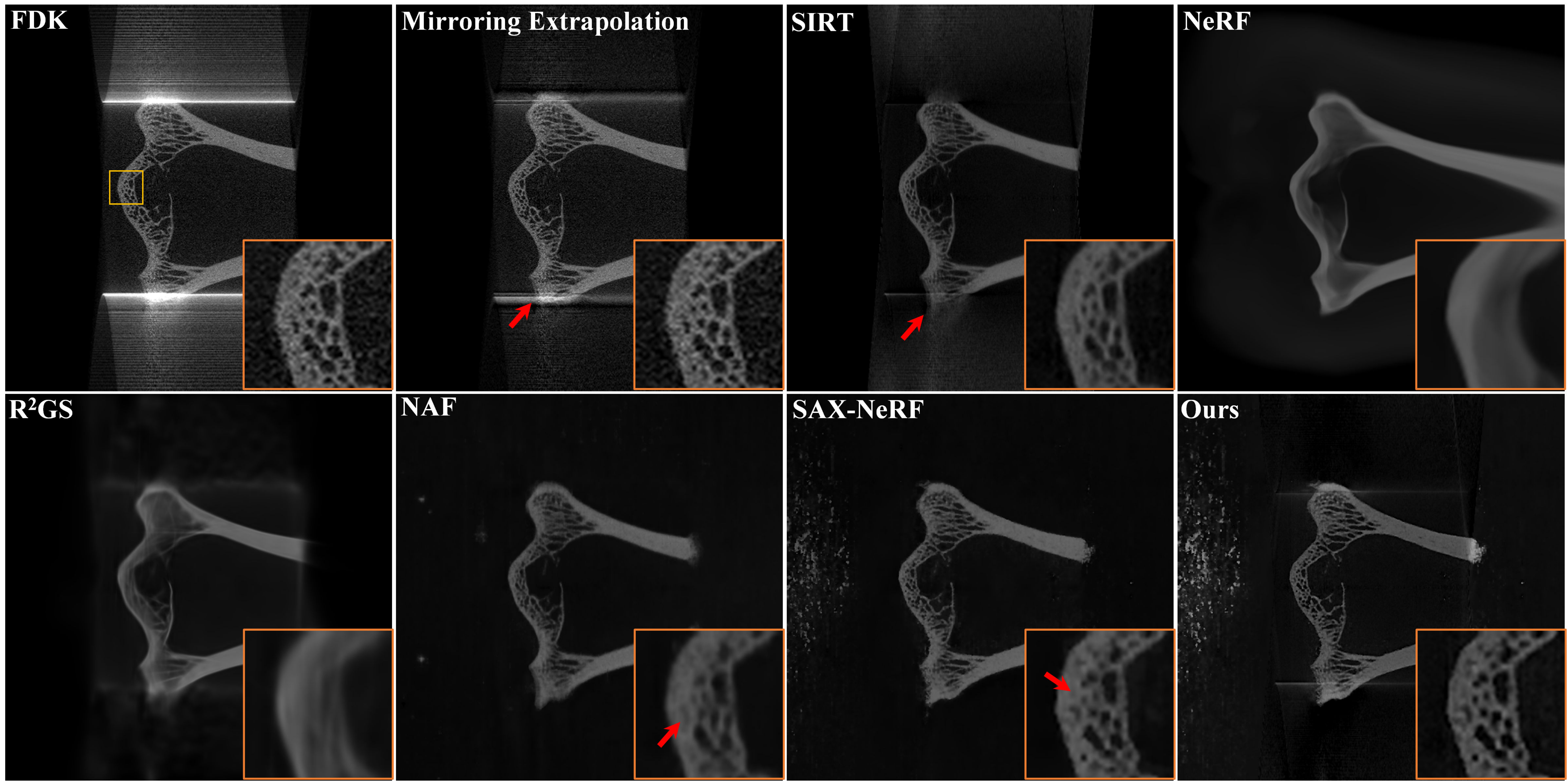}}
  \caption{Results of the sagittal-plane visual assessment from the real-world experiment. An enlarged view of the ROI is located in the bottom right corner.}
  \label{realyz}
\end{figure*}

\subsection{Simulation Experiments at Different FOVs}
To evaluate the capabilities of different models in removing truncation artifacts and extrapolating data across various datasets and FOV sizes. 

Fig. \ref{s_fov} provides a visual comparison of representative reconstructed slices generated by various methods under severe data truncation. The FDK reconstruction suffers from severe truncation artifacts, most notably pronounced bright peripheral rings and a significant shift in attenuation values (CT numbers). Furthermore, the spinous process within the red ROIs is rendered nearly invisible. While projection mirror extension techniques partially mitigate these bright artifacts, the spinous process remains heavily blurred in the magnified ROI, and the attenuation value drift persists. Similarly, the classic iterative SIRT algorithm suppresses truncation-induced rings but leaves substantial artifacts outside the FOV and fails to correct the attenuation drift within the FOV.
In contrast, coordinate-based methods—such as those based on NeRF and 3D Gaussian Splatting—effectively eliminate these truncation artifacts. By bypassing the traditional back-projection operation, these forward-rendering paradigms inherently avoid the Gibbs phenomenon typically triggered by sharp, truncated projection boundaries. For instance, the NAF method produces ring-free reconstructions with an expanded effective FOV compared to FDK; however, close inspection of the magnified ROI reveals degraded bone margins and severe blurring in the soft tissue regions (yellow arrow). Standard NeRF demonstrates superior extrapolation coverage relative to NAF, yet still suffers from significant over-smoothing along the osseous boundaries. Although 3D Gaussian-based methods are highly regarded for their rendering efficiency in sparse reconstruction, they perform poorly under severe truncation, as evidenced by the complete loss of bone contours in the region denoted by the yellow arrow.
Among the state-of-the-art baselines, SAX-NeRF demonstrates robust performance, offering extensive extrapolation and improved detail preservation, likely due to its linear attention mechanism. Nevertheless, the corresponding difference map reveals persistent edge-related artifacts and detail loss (yellow arrow), confirming a deficit in high-frequency recovery. Furthermore, SAX-NeRF fails to resolve the clinically critical, subtle textural details within the relatively flat regions (red arrow).
As discussed in the Methodology section, standard coordinate-based approaches inherently suffer from spectral bias, prioritizing low-frequency structures over high-frequency details. In contrast, our proposed method successfully overcomes this limitation. While maintaining an extrapolation range comparable to SAX-NeRF, our approach achieves a significant enhancement in textural fidelity. Both the magnified views and difference maps demonstrate strong alignment with the GT. This superior performance is directly attributable to our iterative refinement module, which explicitly extracts high-frequency residual information from the original projections and re-injects it into the volume, ensuring the preservation of diagnostically vital micro-structures.

To demonstrate the robustness and generalizability of the proposed method across varying FOVs and diverse anatomical domains, we conducted additional evaluations using pelvic and abdominal datasets. Representative reconstructions are provided in Fig. \ref{m_fov} and \ref{l_fov} of Appendix C. Aligning with our findings from the pancreatic dataset, the proposed approach consistently delivers superior visual fidelity; it achieves extensive spatial extrapolation while meticulously preserving fine-grained textural details.

Table \ref{tab:quantitative_metrics} details the quantitative evaluation on the Pancreas, Pelvis, and Abdomen datasets, assessing both the effective field of view (In FOV) and the extrapolated global volume (Overall). Our proposed structure-constrained approach consistently achieves state-of-the-art performance across all metrics and regions.
Traditional analytical and iterative methods exhibit inherent limitations under severe data truncation, struggling to exceed 20 dB in Overall PSNR on the Pancreas dataset. While coordinate-based implicit rendering methods (e.g., NAF, NeRF, $R^2$GS) significantly improve global accuracy by circumventing the back-projection process, our approach further establishes a new benchmark.
Specifically, our method substantially outperforms the strongest baseline, SAX-NeRF, particularly in recovering high-fidelity details within the FOV. On the Pancreas dataset, our method achieves an In-FOV PSNR of 45.41 dB (a 2.0 dB improvement over SAX-NeRF). This advantage expands on other anatomies, yielding In-FOV PSNRs of 45.67 dB (Pelvis) and 41.59 dB (Abdomen), which surpass SAX-NeRF by impressive margins of 3.84 dB and 3.64 dB, respectively.
Furthermore, our method demonstrates exceptional global extrapolation capabilities. On the Abdomen dataset, it achieves an Overall SSIM of 0.8695. This confirms that the approach not only substantially expands the effective FOV but also mitigates spectral bias, seamlessly unifying local textural fidelity with global anatomical coherence.

\subsection{Comparison of Methods for 3D Reconstruction}
Fig. \ref{3d} presents the reconstruction results obtained using different methods in the sagittal and coronal planes. These results demonstrate that both the Neural Radiance Fields-based method and the Gaussian Splatting-based method successfully address the 3D truncation problem without introducing additional inter-slice discontinuity artifacts. However, compared to the SAX-NeRF method, our approach yields clearer structural details, as indicated by the light green arrows.

\subsection{Real World Experments}
Fig. \ref{realxy} illustrates the reconstructed axial slices derived from the real-world experiment. Unlike the simulated phase, this evaluation directly utilizes raw, severely truncated projections acquired from the physical imaging system. The standard FDK reconstruction suffers from prominent truncation-induced ring artifacts. While the projection mirroring extension partially mitigates these concentric rings, the artifacts remain visually disruptive and completely obscure anatomical structures outside the effective FOV. Although the iterative SIRT algorithm successfully suppresses these ring artifacts, the inherent data loss from truncation results in severe structural distortions in the extrapolated (out-of-FOV) regions, as indicated by the yellow arrows.
Coordinate-based implicit rendering methods, such as NeRF and R2GS, effectively eliminate the ring artifacts and achieve a degree of out-of-FOV recovery; however, they suffer from a severe loss of high-frequency detail, as demonstrated in the magnified red ROIs. The NAF method yields more competitive results, yet the magnified views reveal that the structures indicated by the yellow arrows still suffer from noticeable geometric distortion. Similarly, while SAX-NeRF demonstrates improved structural fidelity, close inspection exposes persistent spatial blurring, particularly in the fine features highlighted by the yellow arrows.
In contrast, the proposed method delivers both an extensive extrapolation range and superior high-resolution detail preservation. We do acknowledge that, in this real-world scenario, our approach introduces faint ring-like artifacts. This phenomenon is likely attributable to numerical inconsistencies between the NeRF-based continuous representation and the discrete SIRT iterative refinement process under physical measurement conditions. Nevertheless, the impact of these artifacts on the overall structural integrity is minor. Overall, the proposed framework demonstrates the most robust and accurate reconstruction performance among all evaluated techniques in the physical experimental setting.

Fig. \ref{realyz} presents the reconstruction results in the sagittal plane from the actual experiment. Similar to the analysis results in the horizontal plane, the proposed method demonstrates superior extrapolation performance and the finest detail resolution.

\subsection{Universality of the Iterative Architecture}
We deployed the proposed architecture on various neural scene representation methods. These methods include NeRF and NAF using classic MLP networks, SAX-NeRF using Transformer networks, and Gaussian splatting. The number of iterations was uniformly set to 500. As shown in the enlarged image of Fig. \ref{fre+}, all methods initially exhibited significant blurring and unclear structures; after iterative refinement, the structures became significantly clearer and the difference from the GT was minimal. Further analysis of their spectral characteristics revealed that before iterative refinement, high-frequency information was severely lost compared to the GT, but after iterative refinement, the high-frequency features were well replenished. Table \ref{tab:method_improvements} shows the global quantitative indicators of the proposed architecture's improvement over different placement methods. The results show that the proposed architecture has significant improvements, especially in the SSIM metric, which confirms the method's ability to improve structural details.

\section{Discussion}
\subsection{Impact of iteration number}
We analyzed the impact of different iteration counts on the results. As shown in the Fig. \ref{iters}, the results improved to some extent with increasing iteration count, and we chose NeRF as the baseline.. Furthermore, the improvement in SSIM was significant, indicating that iterative reconstruction had a substantial effect on structure recovery. However, the effect gradually stabilized with increasing iteration count. Therefore, to balance efficiency and effectiveness, an iteration count of 200 was optimal.

\begin{figure}[htbp] 
  \centerline{\includegraphics[width=0.8\linewidth]{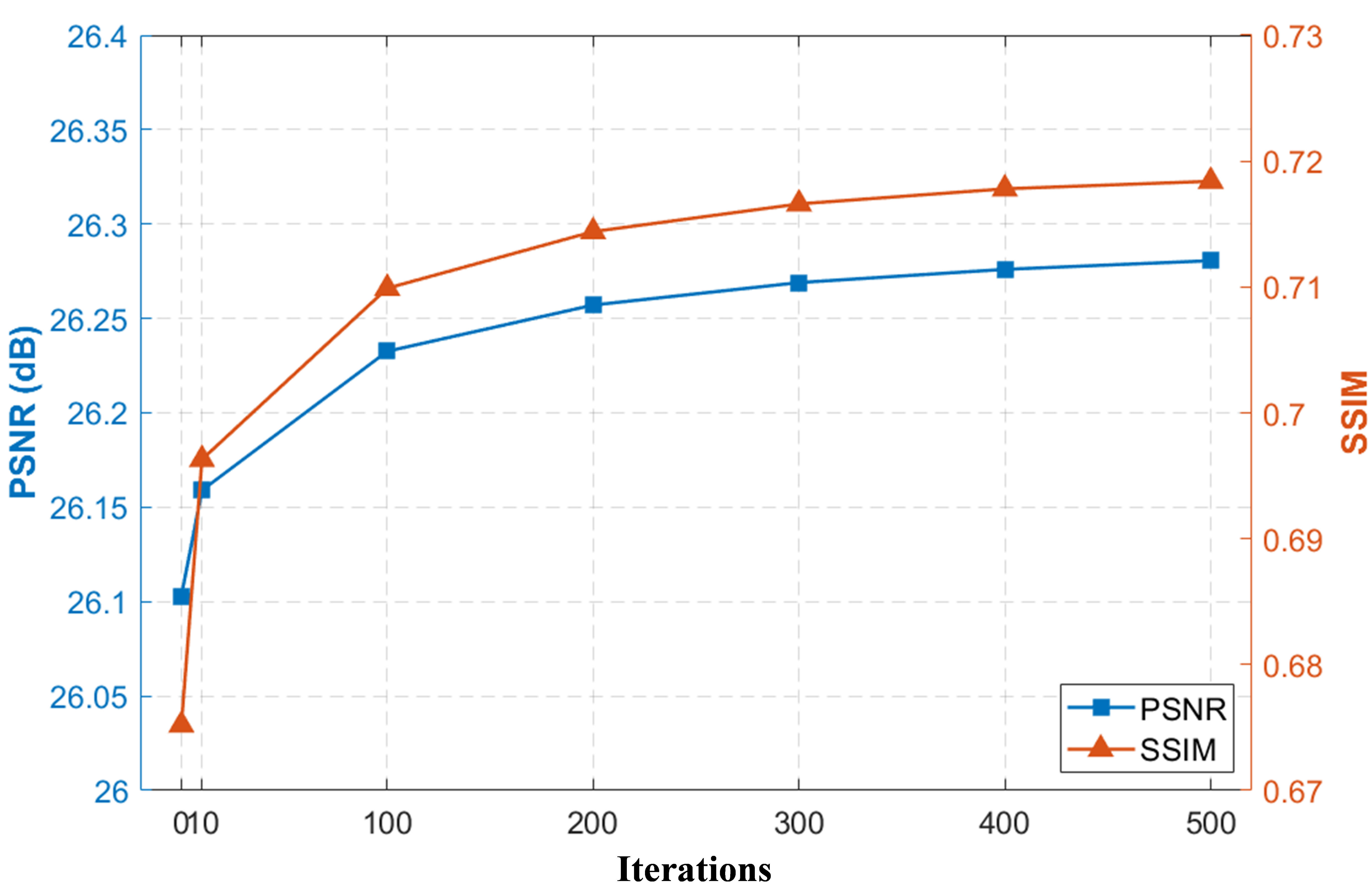}}
  \caption{PSNR and SSIM change curves with the number of iterations of reconstruction.}
  \label{iters}
\end{figure}

\subsection{The Impact of Iterative Reconstruction on Regions Outside the FOV}
While the preceding results demonstrate the efficacy of the proposed iterative refinement architecture in restoring high-frequency details within the FOV, its impact on the extrapolated regions outside the FOV warrants further investigation. To evaluate this, we analyzed the truncated abdominal dataset, comparing the baseline SAX-NeRF against our augmented iterative framework (Fig. \ref{o_fov}). The results reveal a substantial enrichment of high-frequency features in the out-of-FOV regions following iterative refinement. This confirms that the proposed method successfully propagates and supplements critical high-frequency structural information globally, ensuring high-fidelity reconstruction both inside and outside the effective FOV.
\begin{figure}[htbp] 
  \centerline{\includegraphics[width=\linewidth]{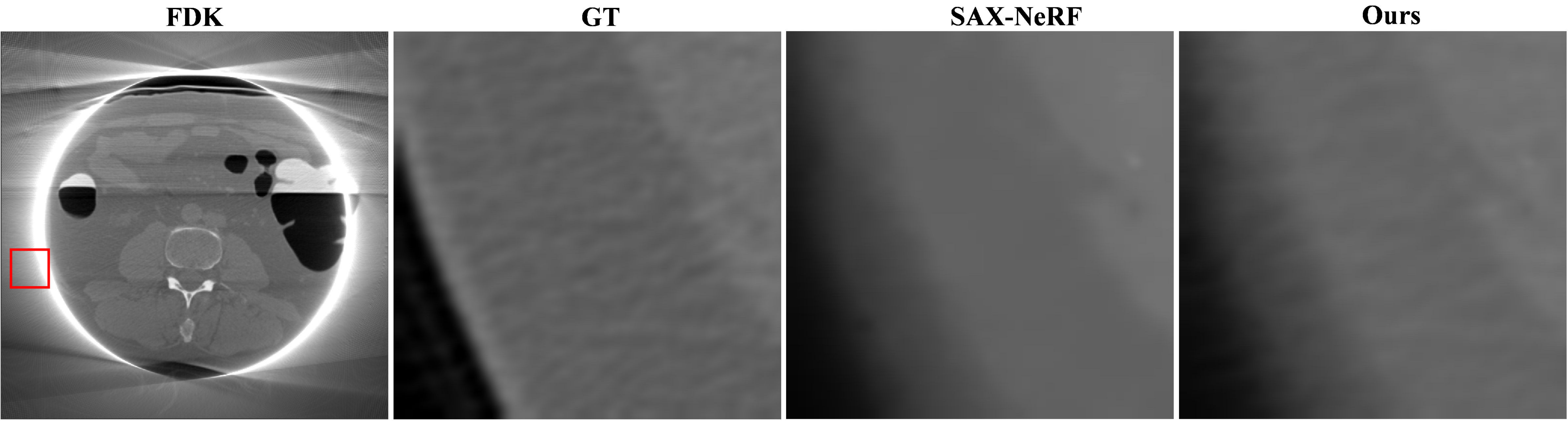}}
  \caption{High-frequency information outside the field of view (FOV) is compared between the baseline method (SAX-NeRF) and the proposed method.}
  \label{o_fov}
\end{figure}

\subsection{Limitations}
While the proposed framework effectively addresses CBCT data truncation in a self-supervised manner, certain limitations warrant discussion. First, computational efficiency remains a significant bottleneck. Our experiments indicate that the most robust coordinate-based baseline relies on the NeRF architecture, which is inherently computationally expensive—requiring approximately ten hours to reconstruct a $512^3$ volume. This prolonged execution limits its immediate applicability in time-sensitive clinical scenarios. Although 3D Gaussian splatting (3DGS) paradigms generally surpass NeRF-based methods in both rendering efficiency and standard reconstruction quality, our empirical findings reveal that 3DGS severely underperforms under truncation conditions, particularly regarding out-of-FOV extrapolation. Consequently, our future work will investigate the fundamental optimization constraints that hinder 3DGS in truncated scenarios, with the goal of integrating its high efficiency into our proposed architecture. Secondly, while the physics-based iterative refinement module successfully recovers high-frequency details, we observed that it introduces weak secondary ring artifacts in real-world experiments. Although these artifacts have a negligible impact on overall structural integrity and diagnostic value, they indicate a slight numerical difference between continuous neural representations and discrete iterative processes in the real world. Future research will focus on developing more robust and seamlessly coupled optimization strategies to completely prevent the generation of these secondary artifacts.

\section{Conclusion}
In this work, we proposed a novel iteratively refined neural scene representation framework for self-supervised truncated CBCT 3D reconstruction The presented method utilizes (i) a forward-rendering scheme to bypass traditional filtering and backprojection operations, thereby fundamentally eliminating truncation-induced ring artifacts, (i) a physics-based iterative refinement module to progressively re-extract and inject missing high-frequency structural details from the raw projections, and (iii) a self-supervised optimization paradigm that entirely eliminates the reliance on un-truncated ground-truth data. Experimental results on both simulated phantoms and real-world physical acquisitions demonstrate the effectiveness of method under data truncation. Notably, our model successfully mitigates the high-frequency losses introduced by coordinate networks, seamlessly combining extensive out-of-field extrapolation with high-fidelity local texture preservation. Future work will focus on more efficient neural field representation methods and designing more robust coupling strategies to improve the clinical application value of the algorithm.

\bibliographystyle{IEEEtran}
\bibliography{ref}

\section{Appendix}

\begin{figure*}[htbp] 
  \centerline{\includegraphics[width=\linewidth]{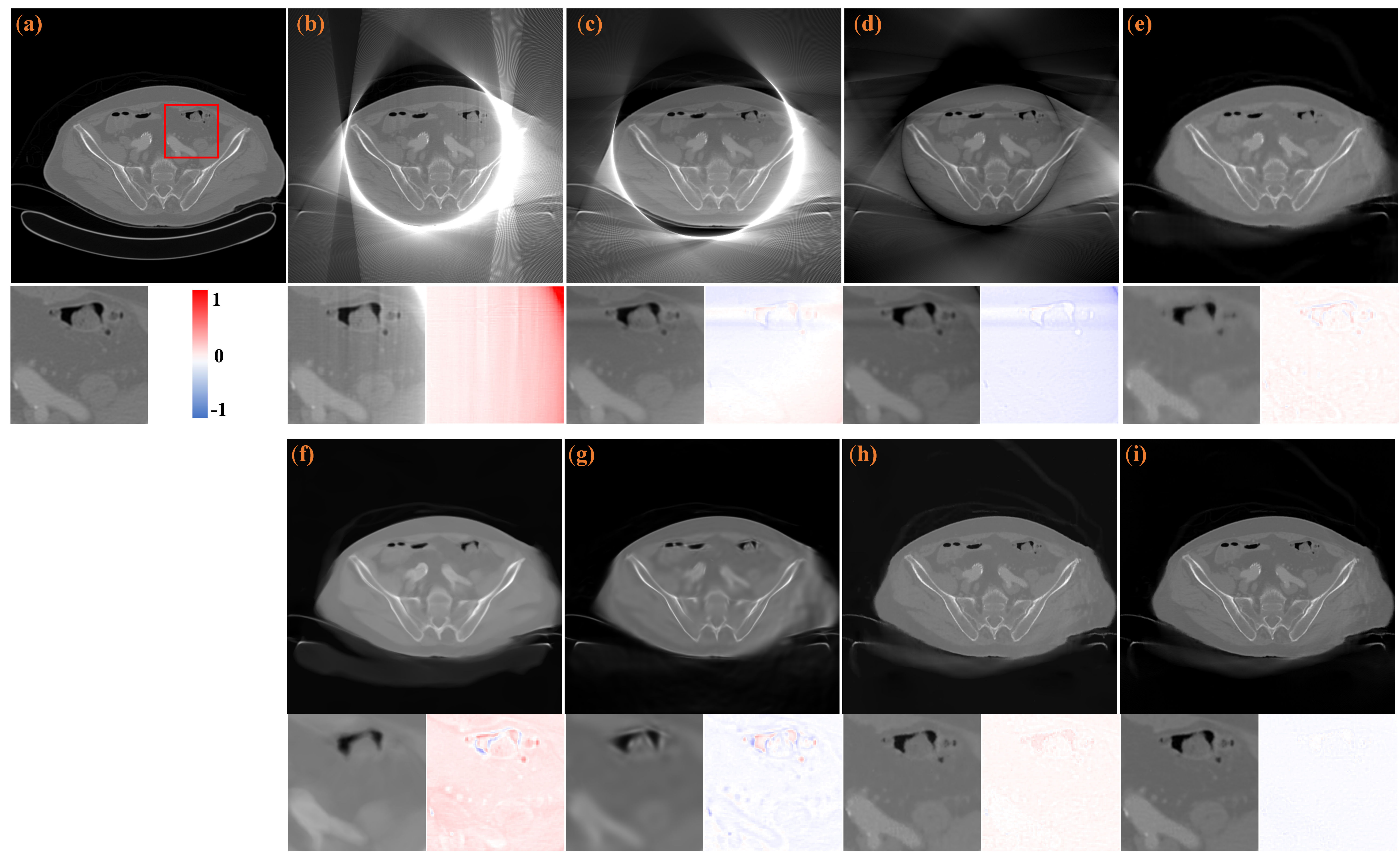}}
  \caption{Visual evaluation results for the pelvis dataset (FOV: approx. 150 pixels, 80th axial slice). The red box delineates the ROI; the bottom-left corner displays a local magnification, while the bottom-right corner shows the difference map between the algorithm's local magnification and the ground truth. (a) Ground Truth, (b) FDK, (c) Projection Mirror Extension Reconstruction, (d) SIRT, (e) NAF, (f) NeRF, (g) ${R^2}$-Gaussian, (h) SAX-NeRF, (i) Ours.}
  \label{m_fov}
\end{figure*}

\begin{table*}[htbp]
\centering
\caption{Scanning parameters for simulation and real data}
\label{tab:scanning_parameters}
\begin{tabular}{lcccc}
\toprule
\multirow{2}{*}{Parameter} & \multicolumn{3}{c}{Simulation Data} & Real Data \\
\cmidrule(lr){2-4} \cmidrule(lr){5-5}
 & Pancreas & Pelvis & Abdomen & Sheep Bone \\
\midrule
Source-to-object distance (SOD) / mm & 1000 & 1000 & 1000 & 150 \\
Source-to-detector distance (SDD) / mm & 1500 & 1500 & 1500 & 600 \\
Detector pixel size / $\text{mm}^2$ & $1.0 \times 1.0$ & $1.0 \times 1.0$ & $1.0 \times 1.0$ & $0.2 \times 0.2$ \\
Detector matrix / pixel & $384 \times 384$ & $384 \times 384$ & $384 \times 384$ & $512 \times 512$ \\
Image matrix / voxel & $512 \times 512 \times 240$ & $512 \times 512 \times 174$ & $512 \times 512 \times 242$ & $512 \times 512 \times 512$ \\
Voxel size / $\text{mm}^3$ & $1.0 \times 1.0 \times 1.0$ & $0.8398 \times 0.8398 \times 0.8398$ & $0.625 \times 0.625 \times 0.625$ & -- \\
Angular range / $^\circ$ & 180 & 180 & 180 & 360 \\
Number of projections & 360 & 360 & 360 & 500 \\
\bottomrule
\end{tabular}
\end{table*}

\begin{figure*}[htbp] 
  \centerline{\includegraphics[width=\linewidth]{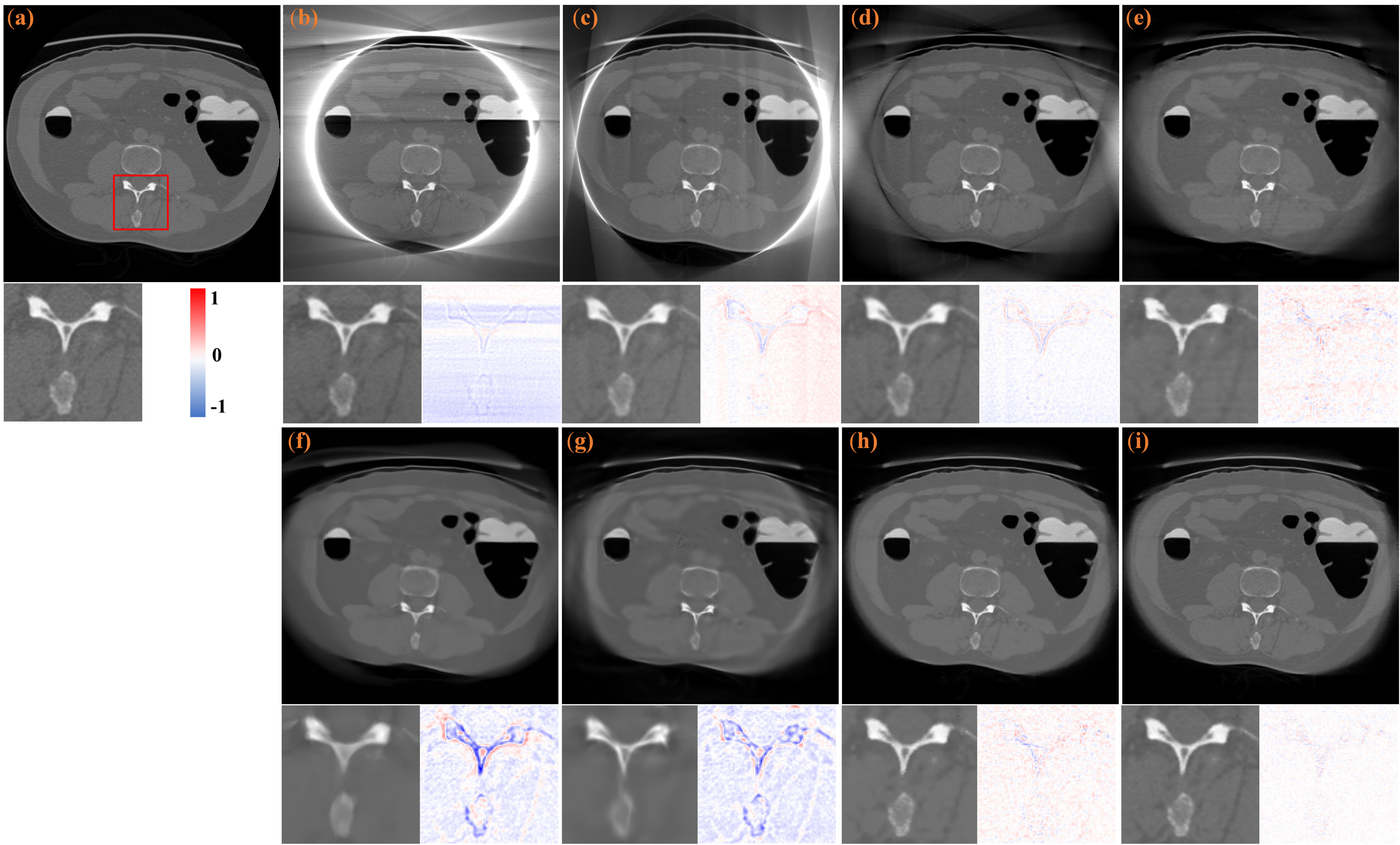}}
  \caption{Visual evaluation results for the abdomen dataset (FOV: approx. 203 pixels, 230th axial slice). The red box delineates the ROI; the bottom-left corner displays a local magnification, while the bottom-right corner shows the difference map between the algorithm's local magnification and the ground truth. (a) Ground Truth, (b) FDK, (c) Projection Mirror Extension Reconstruction, (d) SIRT, (e) NAF, (f) NeRF, (g) ${R^2}$-Gaussian, (h) SAX-NeRF, (i) Ours.}
  \label{l_fov}
\end{figure*}

\section{Proof of Theorem \ref{thm:spectral_concentration}}
\label{appendix:proof_spectral_concentration}

\begin{proof}
Using the SVD decomposition, the transpose of the truncated forward projection matrix is $\mathcal{A}_\mathcal{T}^\top = \mathbf{V} \mathbf{\Sigma} \mathbf{U}^\top$. The initial gradient $\mathbf{g}_0$ of the data fidelity term is computed as:
\begin{equation}
    \mathbf{g}_0 = \mathcal{A}_\mathcal{T}^\top \mathbf{r}_0 = \mathbf{V} \mathbf{\Sigma} \mathbf{U}^\top \mathbf{r}_0.
\end{equation}
Projecting $\mathbf{g}_0$ onto the low-order image subspace $\mathbf{V}_{\text{low}}$ yields:
\begin{equation}
    \mathbf{V}_{\text{low}}^\top \mathbf{g}_0 = \mathbf{V}_{\text{low}}^\top \mathbf{V} \mathbf{\Sigma} \mathbf{U}^\top \mathbf{r}_0.
\end{equation}
Exploiting the orthonormality of $\mathbf{V}$, we have $\mathbf{V}_{\text{low}}^\top \mathbf{V} = [\mathbf{I}_k \quad \mathbf{0}]$, where $\mathbf{I}_k$ is the $k \times k$ identity matrix. Consequently:
\begin{equation}
    \mathbf{V}_{\text{low}}^\top \mathbf{g}_0 = [\mathbf{I}_k \quad \mathbf{0}] \mathbf{\Sigma} \mathbf{U}^\top \mathbf{r}_0 = \mathbf{\Sigma}_k \mathbf{U}_{\text{low}}^\top \mathbf{r}_0,
\end{equation}
with $\mathbf{\Sigma}_k = \operatorname{diag}(\sigma_1, \dots, \sigma_k)$. Taking the $\ell_2$-norm and applying the submultiplicativity of the spectral norm, we obtain:
\begin{equation}
    \|\mathbf{V}_{\text{low}}^\top \mathbf{g}_0\| \le \|\mathbf{\Sigma}_k\|_2 \cdot \|\mathbf{U}_{\text{low}}^\top \mathbf{r}_0\|.
\end{equation}
Because the largest singular value of $\mathbf{\Sigma}_k$ is $\sigma_1$, we have $\|\mathbf{\Sigma}_k\|_2 = \sigma_1$. Invoking Assumption 1 yields the upper bound for the low-frequency gradient energy:
\begin{equation}
    \|\mathbf{V}_{\text{low}}^\top \mathbf{g}_0\| \le \sigma_1 \varepsilon \|\mathbf{r}_0\|.
\label{eq:low_bound}
\end{equation}

Analogously, the projection of the gradient onto the high-order subspace is:
\begin{equation}
    \mathbf{V}_{\text{high}}^\top \mathbf{g}_0 = \mathbf{\Sigma}_{\text{high}} \mathbf{U}_{\text{high}}^\top \mathbf{r}_0.
\end{equation}
Since $\mathbf{r}_0$ can be orthogonally decomposed as $\mathbf{r}_0 = \mathbf{U}_{\text{low}} \mathbf{U}_{\text{low}}^\top \mathbf{r}_0 + \mathbf{U}_{\text{high}} \mathbf{U}_{\text{high}}^\top \mathbf{r}_0$, Pythagoras' theorem implies:
\begin{equation}
    \|\mathbf{U}_{\text{high}}^\top \mathbf{r}_0\|^2 = \|\mathbf{r}_0\|^2 - \|\mathbf{U}_{\text{low}}^\top \mathbf{r}_0\|^2.
\end{equation}
Assumption 1 ($\|\mathbf{U}_{\text{low}}^\top \mathbf{r}_0\| \le \varepsilon \|\mathbf{r}_0\|$) thereby guarantees that:
\begin{equation}
    \|\mathbf{U}_{\text{high}}^\top \mathbf{r}_0\| \ge \sqrt{1 - \varepsilon^2} \|\mathbf{r}_0\|.
\end{equation}
Let $\sigma_r$ be the smallest non-zero singular value in $\mathbf{\Sigma}_{\text{high}}$. The lower bound for the high-frequency gradient energy is given by:
\begin{equation}
    \|\mathbf{V}_{\text{high}}^\top \mathbf{g}_0\| \ge \sigma_r \|\mathbf{U}_{\text{high}}^\top \mathbf{r}_0\| \ge \sigma_r \sqrt{1 - \varepsilon^2} \|\mathbf{r}_0\|.
\label{eq:high_bound}
\end{equation}

To ensure that the high-frequency gradient strictly dominates the low-frequency gradient, we require $\|\mathbf{V}_{\text{high}}^\top \mathbf{g}_0\| > \|\mathbf{V}_{\text{low}}^\top \mathbf{g}_0\|$. Comparing the bounds established in Eq. \ref{eq:low_bound} and Eq. \ref{eq:high_bound}, this holds true if:
\begin{equation}
    \sigma_r \sqrt{1 - \varepsilon^2} > \sigma_1 \varepsilon \implies \frac{\varepsilon}{\sqrt{1 - \varepsilon^2}} < \frac{\sigma_r}{\sigma_1} = \kappa^{-1},
\end{equation}
where $\kappa$ is the effective condition number of the truncated forward operator. Because the spectral bias of the NeSR naturally drives $\varepsilon \to 0$ for low-frequency structures, this condition is readily satisfied in practice, completing the proof.
\end{proof}

\subsection{Scan Parameters} 
The scanning parameters for the simulated and actual data are shown in Table \ref{tab:scanning_parameters}.

\subsection{Reconstruction Results from Different Methods} 
Fig. \ref{m_fov} and \ref{l_fov} show the results of truncation and reconstruction of the pelvis and abdomen using different methods.

\vspace{12pt}

\end{document}